\newcolumntype{C}[1]{>{\centering\arraybackslash}m{#1}}
\title{
    Imitation Learning from Observations: An Autoregressive Mixture of Experts Approach
    \thanks{Work supported by
    the Research Foundation Flanders (FWO) postdoctoral grant 12Y7622N and research projects
    G081222N, G033822N, and G0A0920N; 
    Research Council KU Leuven C1 project No. C14/24/103; 
    European Union’s Horizon 2020 research and innovation programme under the Marie Skłodowska-Curie grant agreement No. 953348;
    Flemish Agency for Innovation and Entrepreneurship (VLAIO) under research project No. HBC.2021.0939 (BECAREFUL).
    }
    \thanks{
        $^1$ KU Leuven, Department of Electrical Engineering \textsc{esat-stadius} -- %
        Kasteelpark Arenberg 10, bus 2446, B-3001 Leuven, Belgium
        \newline
        {\sf
            \{%
                \href{mailto:renzi.wang@kuleuven.be}{renzi.wang},
                \href{mailto:panos.patrinos@kuleuven.be}{panos.patrinos}%
            \}%
            \href{mailto:renzi.wang@kuleuven.be,panos.patrinos@kuleuven.be}{@kuleuven.be}%
        }
    }
    \thanks{
        $^2$ Siemens Digital Industries Software, 3001 Leuven, Belgium
        \newline
        {\sf
            \{%
                \href{mailto:flavia.acerbo@siemens.com}{flavia.acerbo},
                \href{mailto:son.tong@siemens.com}{son.tong}%
            \}%
            \href{mailto:flavia.acerbo@siemens.com,son.tong@siemens.com}{@siemens.com}%
        }
    }
}
\author{
    \IEEEauthorblockN{Renzi Wang$^1$}
\and
    \IEEEauthorblockN{Flavia Sofia Acerbo$^2$}
\and 
    \IEEEauthorblockN{Tong Duy Son$^2$}
\and 
    \IEEEauthorblockN{Panagiotis Patrinos$^1$}
}
\begin{document}

\maketitle

\begin{abstract}
This paper presents a novel approach to imitation learning from observations, 
where an autoregressive mixture of experts model is deployed to fit the underlying policy. 
The parameters of the model are learned via a two-stage framework. 
By leveraging the existing dynamics knowledge, 
the first stage of the framework estimates the control input sequences and hence reduces the problem complexity. 
At the second stage, 
the policy is learned by solving a regularized maximum-likelihood estimation problem using the estimated control input sequences.
We further extend the learning procedure by incorporating a Lyapunov stability constraint to ensure asymptotic stability of the identified model, 
for accurate multi-step predictions.
The effectiveness of the proposed framework is validated using two autonomous driving datasets collected from human demonstrations, 
demonstrating its practical applicability in modelling complex nonlinear dynamics.

\end{abstract}
\section{Introduction}
In modern control frameworks, such as model predictive control (MPC), 
accurate prediction of future system states is essential for effective control. 
This prediction task becomes particularly challenging in multi-agent environments with limited or no communication between agents, 
e.g. mixed human and autonomous driving scenarios,
where each controller has restricted access to other agents' control policies, 
and yet inaccurate predictions of their future states could cause severe consequences in autonomous driving.

The majority of motion prediction methods either use simple models, 
such as constant velocity or acceleration, or rely on learning-based approaches to predict future state trajectories \cite{gulzar2021survey}. 
However, in certain safety-critical interactive scenarios, it can also be useful to predict the control actions of the other vehicles \cite{wang2023interaction}. 
This enables anticipation of their intentions and allows the integration of their dynamics into the optimization process, enabling the ego vehicle to respond more quickly.
To achieve accurate predictions in such scenarios, 
one must construct models of control policies from available observational data. 
This approach of learning control policies solely from trajectory observations falls under the domain of Imitation Learning from Observations (IfO) \cite{ijcai2019p0882}.
The absence of control action information in the demonstrations prevents the use of typical imitation learning approaches like behavioral cloning (BC) \cite {bain1995framework} or generative adversarial imitation learning (GAIL) \cite{ho2016generative}. Research on adapting these algorithms to the IfO setting can be categorized into model-free and model-based approaches. 
Model-free approaches typically work in an adversarial setting, 
where a discriminator is trained to distinguish between state transitions generated by the policy or from the demonstrations \cite{torabi2019generativeadversarialimitationobservation}. 
On the other hand, 
model-based approaches exploit the hierarchical structure of the closed-loop trajectories by separating the problem into model learning and policy learning \cite{torabi2018behavioral, zhang2024action, nair2017combining, pmlr-v97-edwards19a}.
In these approaches, a forward or an inverse dynamical model is first learned from system data. 
This model is used to infer the control actions from the demonstrated trajectories.
Policy learning then proceeds using these inferred actions.
However,
all these approaches are purely data-driven, 
and hence overlook at integrating possible prior knowledge of the system dynamics and defining a control policy structure which can be used as part of MPC frameworks, 
such that it can provide fast and stable control predictions over a time horizon.
In contrast to those approaches, 
we utilize the prior knowledge of the system model instead of learning the model from scratch using data.

Previous works considered estimating optimal control policies for the autonomous driving agents through inverse optimal control \cite{kuderer2015learning, even2022learning}. 
However, this approach has two main limitations: 
it restricts the possible control policies to those derived from a pre-defined optimal control problem,
and the resulting formulations are computationally expensive for real-time implementation within an MPC framework. 
Switching systems offer an attractive alternative by providing a balanced model that can approximate nonlinear systems with simpler computational requirements compared to online optimization approaches.
Notably, \cite{bemporad2002explicit} has shown that piecewise affine systems---a special class of switching systems---represent the explicit solution of MPC for discrete-time linear time-invariant systems.
This theoretical connection further motivates the use of switching systems for approximating general control laws.
In this work, we focus on \emph{stochastic switching systems},
where the stochastic model explicitly accounts for the inherent uncertainty in agent behavior.

Like any model employed in control applications,
a critical challenge lies in ensuring reliable multi-step predictions.
Prediction errors tend to accumulate over time due to model mismatch, 
potentially leading to significant deviations in longer horizons. 
Two primary approaches have emerged to address this issue:
\begin{inlinelist}
    \item Directly train the prediction model with loss measuring the multi-step-ahead prediction error, such as in \cite{beintema2023continuoustime, forgione2021continuous};
    \item introduce a stability constraint in the model.
\end{inlinelist}
While the first approach directly addresses multi-step prediction accuracy, 
it remains inherently limited by the specific prediction horizon used during training. 
When deployments require predictions beyond this training horizon, 
error accumulation issues persist. 
The second approach, focusing on stability guarantees, 
has been explored across various data-driven modeling frameworks:
\cite{bonassi2021stability} derived sufficient conditions for Neural Nonlinear Auto Regressive eXogenous (NNARX).
\cite{miller2018stable} studied the stability property for recurrent neural network,
and \cite{bonassi2020lstm} established a sufficient condition for long short-term memory (LSTM) networks.

Our work makes the following key contributions:
\begin{itemize}
    \item We introduce a stochastic switching system to approximate the control policy from observations. The parameters in the policy can be learned via a two-stage approach;
    \item We derive a sufficient Lyapunov stability condition for the proposed stochastic switching model. Such condition can be directly enforced during the training process;
    \item We validate the effectiveness of the proposed approach on two autonomous driving datasets containing human demonstrations.
\end{itemize}

The paper is organized as follows:
\cref{sec: problem_formulation} formulates the problem of policy learning from state-only data.
\cref{sec: two-stage-approach} introduces the proposed model and a two-stage learning approach.
After that, \cref{sec: stability} establishes a sufficient stability condition for the identified model.
Finally, \cref{sec: application} evaluates the proposed method on two autonomous driving datasets.

\section*{Notation}
Denote the cone of $m \times m$ positive definite matrices by $\mathbb{S}_{++}^{m}$.
We define $\lse: \R^{d} \to \R$ as
\(
    \lse(x) \! =\! \ln \big(
        \textstyle{\sum_{j=1}^{d}} \exp(x_j) 
    \big),
\)
and the softmax function $x \mapsto \sigma(x)$ 
with  
\(
    \sigma_i(x) \! = \!\exp\big(x_i\big) / \sum_{j=1}^{d}\exp\big(x_j\big)
\)
the $i$th entry of the output $\sigma(x)$.

\section{Problem Formulation}\label{sec: problem_formulation}
Let $\stateSeq \dfn \{x_t\}_{t = 0}^{T-1}$ be the state trajectory generated from a known discrete-time dynamical system
\begin{equation}\label{eq: sys_dyn}
    x_{t+1} = f(x_t, u_t),
\end{equation}
where $x_t \in \re^{n_x}$ is the system state and $u_t \in \re^{n_u}$ is the control input,
which is sampled from a stochastic feedback control policy 
\begin{equation}\label{eq: control_policy}
    u_t \sim \pi(x_t, \dots, x_{t-t_x}, u_{t-1}, \dots, u_{t-t_u}, \xi_t \midsc \theta)
\end{equation}
where $t_x \geq 0$, $t_u > 0$ denotes the window length with respect to the system states and the previous control inputs, respectively.
Variable $\xi_t \in \Xi$ represents the latent random variable,
and $\theta \in \mathcal{X}_{\theta}$ is the policy parameter, where $\mathcal{X}_{\theta}$ is the parameter space.

In this work, 
we are interested in approximating the control policy $\pi$
using the state trajectory $\stateSeq$.

\subsection{Challenges in direct maximum likelihood estimation}\label{sec: challenge}
The policy approximation problem can be formulated as a regularized maximum likelihood estimation (MLE) problem (where we minimize the negative log-likelihood):
\begin{equation}\label{eq: mle_general}
    \minimize_{\theta} \ell(\theta) + \reg(\theta)
\end{equation}
Here, the negative log-likelihood is defined as
\begin{equation*}
    \ell(\theta) = -\ln p(\stateSeq \midsc \theta)
\end{equation*}
and $\reg: \mathcal{X}_{\theta} \to \re_+$ is a regularization term. 
Let $\inputSeq \dfn \{u_t\}_{t=0}^{T-1}$ be the unobserved sequence of control inputs, 
and $\modeSeq \dfn \{\xi_t\}_{t=0}^{T-1}$ be the unobserved sequence of random variables, both corresponding to the state trajectory $\stateSeq$.
The likelihood can be expressed as
\begin{equation}\label{eq: likelihood_general}
    p(\stateSeq \midsc \theta) = \int_{\inputSeq} \int_{\modeSeq \in \Xi^{T}} p(\stateSeq, \inputSeq, \modeSeq \midsc \theta) \,d \inputSeq\, d\modeSeq.
\end{equation}

A common approach to solve an MLE problem with latent variables is to apply the expectation-maximization (EM) method \cite{dempster1977maximum}.
Given the posterior distribution 
\begin{equation}\label{eq: posterior_general}
    p(\inputSeq, \modeSeq \mid \stateSeq \midsc \tilde{\theta}) 
    = \frac{
        p(\inputSeq, \modeSeq, \stateSeq \midsc \tilde{\theta})
    }{
        \int_{\inputSeq}\int_{\modeSeq \in \Xi^T}p(\inputSeq, \modeSeq, \stateSeq \midsc \tilde{\theta}) \,d \inputSeq \,d \modeSeq
    }
\end{equation}
with parameter $\tilde{\theta}$, 
one can apply Jensen's inequality to the negative log-likelihood:
\begin{align} 
    \ell(\theta) 
    = &\; \nonumber {-}\ln \Big[\int_{\inputSeq} \int_{\modeSeq \in \Xi^{T}} p(\stateSeq, \inputSeq, \modeSeq \midsc \theta) \,d \inputSeq\, d\modeSeq\Big] \\
    = &\; \nonumber {-}\ln \left[\int_{\inputSeq} \int_{\modeSeq \in \Xi^{T}} p(\stateSeq, \inputSeq, \modeSeq \midsc \theta) \frac{p(\inputSeq, \modeSeq \mid \stateSeq \midsc \tilde{\theta})}{p(\inputSeq, \modeSeq \mid \stateSeq \midsc \tilde{\theta})} \,d \inputSeq\, d\modeSeq\right] \\
    \leq &\; {-} \int_{\inputSeq} \!\int_{\modeSeq \in \Xi^{T}} \!p(\inputSeq, \modeSeq \mid \stateSeq \midsc \tilde{\theta}) \!\ln \left[\frac{p(\stateSeq, \inputSeq, \modeSeq \midsc \theta)}{p(\inputSeq, \modeSeq \mid \stateSeq \midsc \tilde{\theta})}\right] \,d \inputSeq\, d\modeSeq.
    \label{eq: upper_bound_general} 
\end{align}
The right-hand side of the inequality \eqref{eq: upper_bound_general} computes an expectation.
In the maximization step, 
owing to the negative sign, 
one minimizes the regularized negative expectation with parameter $\tilde{\theta}$ equal to the solution from previous iteration. 
However, for general random variables $\inputSeq$, $\modeSeq$,
computing the posterior distribution \eqref{eq: posterior_general}
and the expectation in \eqref{eq: upper_bound_general} can be intractable due to the integration.

\section{Two-stage approach}\label{sec: two-stage-approach}
To address the aforementioned difficulty,
we propose a two-stage approach. 
First, instead of integrating the latent control input $\inputSeq$,
we find a point estimation 
$\bar{u}_t$, for $t = 0, \dots, T-1$, 
by utilizing the state sequence and the prior knowledge on system dynamics \eqref{eq: sys_dyn}.
Second, we approximate the control policy \eqref{eq: control_policy} using the following stochastic switching system \cite{wang2024em++}:
\begin{subequations}\label{eq: policy_estimation}
    \begin{align}
        \xi_t \sim &\; p(\xi_{t} \mid z_t, \xi_{t-1} = i \midsc \boldsymbol{\Theta}) = \sigma_{\xi_t}(\Theta_i^\top z_t)\label{eq: switching_mechanism}\\
          u_t =    &\; K_{\xi_t} z_t + b_{\xi_t} + w_t.\label{eq: subsystem_dyn}
    \end{align}
\end{subequations}
This system switches among $d$ subsystems.
The discrete random variable $\xi_t \in \Xi = \{1, \dots, d\}$ denotes the active subsystem index and is sampled from a categorical distribution modeled by a softmax function.
The variable $z_t$ denotes the input-state history
\begin{equation}\label{eq: input_state_history}
    z_t \dfn (u_{t-1}, \dots, u_{t-t_u}, x_t, \dots, x_{t-t_x}).
\end{equation}
To account for the underlying nonlinear dynamics that are not captured by a linear model,
the feedback control input in \eqref{eq: subsystem_dyn} is subjected to an additive Gaussian noise $w_t \sim \gauss(0, \Sigma_{\xi_t})$.
The system \eqref{eq: policy_estimation} is closely related to the mixture of experts (MoE) architecture \cite{jordan1994hierarchical},
where the softmax function \eqref{eq: switching_mechanism} acts as a gating function determining the activation probability of each subsystem (expert).
Each subsystem \eqref{eq: subsystem_dyn} acts as an expert providing a linear approximation of the control policy.
The system \eqref{eq: policy_estimation} adaptively selects experts based on the current input-state history $z_t$, 
and previous selection $\xi_{t-1}$, enabling specialized handling for different conditions.

The general switching mechanism \eqref{eq: switching_mechanism} encompasses three special cases in the literature,
where each mechanism depends on a distinct subset of these variables:
\begin{enumerate}
    \item Static switching:
    \begin{equation}\label{eq: static_switching}
        p(\xi_{t+1} \!\mid\! z_t, \xi_t\!\midsc \Theta) = p(\xi_{t+1} \!\midsc \Theta) = \sigma_{\xi_{t+1}}(\Theta)  
    \end{equation}
    with $\Theta_i = \Theta \in \re^{1 \times d}$. 
    This formulation is commonly used in mixture models such as Gaussian mixture models \cite[\S 9.2]{bishop2006pattern}.
    \item Mode-dependent switching:
    \begin{equation}\label{eq: only_mode}
        p(\xi_{t+1} \!\mid\! z_t, \xi_t\!\midsc \Theta) = p(\xi_{t+1} \!\mid \!\xi_t \!\midsc \Theta) = \sigma_{\xi_{t+1}}(\Theta_{\xi_t})  
    \end{equation}
    with $\Theta_i \in \re^{1 \times d}$ for all $i \in \Xi$.
    This formulation is commonly used in Markov jump systems \cite{costa2005discrete}.
    \item State-dependent switching:
    \begin{equation}\label{eq: only_state}
        p(\xi_{t+1} \!\mid\! z_t, \xi_t\!\midsc \Theta) = p(\xi_{t+1} \!\mid\! z_t\! \midsc \Theta) = \sigma_{\xi_{t+1}}(\Theta^\top\! z_t)
    \end{equation}
    with $\Theta_i = \Theta \in \re^{ n_z \times d}$.
    This formulation is commonly used in the classic mixture of experts models \cite{jordan1994hierarchical}.
\end{enumerate}

For the point estimation step to be well-defined, 
we consider a family of systems $f$ that satisfies the following assumption:
\begin{assumption}\label{assum: invertibility}
    For a given control input set $U \subseteq \re^{n_u}$, 
    consider the system dynamics model \eqref{eq: sys_dyn} with $u_t \in U$ for all $t = 0, \dots, T-1$.
    The mapping $f(x_t, \cdot): U \to \re^{n_x}$ is bijective for all $x_t \in \re^{n_x}$.
    Therefore, there exists a unique inverse mapping $f^{-1}(x_t, \cdot): \re^{n_x} \to U$ such that
    \[
        u_t = f^{-1}(x_t, x_{t+1})
    \]
    for any consecutive state pair $(x_t, x_{t+1})$ observed in the state trajectory.
\end{assumption}
Under this assumption,
we obtain point estimations $\bar{u}_t$, for $t = 0, \dots, T-1$.
These estimations enable us to reformulation the problem:
instead of solving the MLE problem \eqref{eq: mle_general} with likelihood \eqref{eq: likelihood_general}, 
we solve 
\begin{equation}\label{eq: mle_problem}
    \minimize_{\theta} \likelihood(\theta) + \reg(\theta)
\end{equation}
where the parameter $\theta \dfn \{K_1, b_1, \Sigma_1, \dots, K_d, b_d, \Sigma_d, \boldsymbol{\Theta}\}$,
and the likelihood is defined with the point estimations
\begin{equation}\label{eq: likelihood_problem}
    \begin{aligned}
    \likelihood(\theta) 
    = &\; -\ln p(\bar{\stateSeq}, \bar{\inputSeq}, z_{\tau} \midsc \theta) \\
    = &\; -\ln \Big[
        \tlsum_{\bar{\modeSeq} \in \Xi^{T - \tau}} 
        p(\bar{\stateSeq}, \bar{\inputSeq}, \bar{\modeSeq}, z_{\tau} \midsc \theta)
    \Big],
    \end{aligned}
\end{equation}
where the first $\tau$ data points are used for constructing the initialization
\[
    z_{\tau} = (\bar{u}_{\tau-1}, \dots, \bar{u}_{\tau-t_u}, x_{\tau}, \dots, x_{\tau-t_x}),
\]
and 
$\bar{\stateSeq} \dfn \{x_t\}_{t=\tau+1}^{T-1}$,
$\bar{\inputSeq} \dfn \{\bar{u}_t\}_{t=\tau}^{T-1}$,
$\bar{\modeSeq} \dfn \{\xi_t\}_{t=\tau}^{T-1}$.

The complete procedure is summarized in \cref{alg: two_stage}.
\begin{algorithm}[!htpb]
    \caption{Two-stage approach} \label{alg: two_stage}
    \begin{algorithmic}[1]
        \Require{State sequence $\stateSeq$, dynamical model $f$, initial guess $\theta^0$}
        \State Estimate the control input sequence by inverting the dynamical model: 
        \begin{equation}\label{eq: nonlinear_ls}
            \bar{u}_t = f^{-1}(x_t, x_{t+1}), \quad \forall t = 0, \dots, T-1.
        \end{equation} 
        \State{Fitting the model \eqref{eq: policy_estimation} by solving \eqref{eq: mle_problem} 
            using $\stateSeq$, $\bar{u}_0, \dots, \bar{u}_{T-1}$, and $\theta^0$} 
    \end{algorithmic}
\end{algorithm}

\subsection{Policy learning using \EMpp}\label{sec: policy_learning}
To solve the regularized MLE problem \eqref{eq: mle_problem},
we employ \EMpp \cite{wang2024em++}.
\EMpp applies the majorization-minimization (MM) principle \cite{lange2016mm}
to identify stochastic switching systems through solving a regularized MLE problem \eqref{eq: mle_problem}
with the negative log-likelihood $\loss: \mathcal{X}_{\theta} \to \R$
\begin{equation*}
	\begin{aligned}
		\loss(\theta)
		= {-} \ln
		p(\ve{y}, z_0  \midsc \theta)
		= {-} \ln 
		\left[ \textstyle{\sum_{\modeSeq\in \Xi^{T}}} 
		p(\ve{y}, \ve{\xi}, z_0  \midsc \theta)
		 \right],
	\end{aligned}
\end{equation*}
where $\ve{y} = \{y_t\}_{t = 0}^{T-1}$ is a sequence of observations generated with initialization $z_0$,
and $\modeSeq$ is the sequence of discrete latent variable indicating the active subsystem.

Following the MM principle, at each iteration $k$,
\EMpp constructs a convex function $\surrogate^k$ satisfying
\begin{subequations}\label{eq: property_Q}
	\begin{align}
		\surrogate^k(\theta) \geq & \; \loss(\theta), \quad \forall \theta \in \mathcal{X}_\theta,  \label{eq: Q_pos} \\
		\surrogate^k(\theta^k) =      & \; \loss(\theta^k) \label{eq: Q_equal_func}
	\end{align}
\end{subequations}
in the majorization step,
then minimizes the regularized function $\surrogate(\theta) + \reg(\theta)$ as a surrogate problem in the minimization step. 
This general framework includes the classical EM algorithm as a special case 
when the surrogate function is obtained by applying Jensen's inequality with the posterior distribution,
as presented in \cref{sec: challenge}.

Having formulated the policy learning problem \eqref{eq: mle_problem} with likelihood \eqref{eq: likelihood_problem},
we now show how to solve it using \EMpp.
The key observation is that the problem \eqref{eq: mle_problem} naturally maps to \EMpp's framework by identifying
that the state and point estimation pair $(\bar{\stateSeq}, \bar{\inputSeq})$ represents the observation sequence $\ve{y}$.
To construct the likelihood \eqref{eq: likelihood_problem}, the joint probability can be factorized as
{
    \small
    \begin{equation*}
        \begin{aligned}
              &\; p(\bar{\stateSeq}, \bar{\inputSeq}, \bar{\modeSeq}, z_{\tau} \!\midsc\! \theta) \\
            = &\; p(x_{\tau+1}, \bar{u}_{\tau:\tau+1}, \xi_{\tau:\tau+1}, z_{\tau}\!) \\
            &\; \;\;\; p(\bar{\stateSeq}_{\tau{+}2: T{-}1}, \bar{\inputSeq}_{\tau{+}2: T{-}1}, \bar{\modeSeq}_{\tau{+}2: T{-}1}\!\mid\! x_{\tau{+}1}, \bar{u}_{\tau:\tau{+}1}, \xi_{\tau:\tau{+}1}, z_{\tau} \!\midsc\! \theta) \\
            = &\; p(x_{\tau+1}, \bar{u}_{\tau:\tau+1}, \xi_{\tau:\tau+1}, z_{\tau}\!)\\
            &\; \; \textstyle\prod\limits_{t = \tau+1}^{T-2} \!p(x_{t+1}, \bar{u}_{t+1}, \xi_{t+1} \!\mid\! \bar{\stateSeq}_{\tau+1: t}, \bar{\inputSeq}_{\tau: t}, \bar{\modeSeq}_{\tau: t}, z_{\tau} \!\midsc\! \theta).
        \end{aligned}
    \end{equation*}    
}%
The conditional distribution in the second equation can be factorized as 
{
    \small
    \begin{equation*}
        \begin{aligned}
            &\; p(x_{t+1}, \bar{u}_{t+1}, \xi_{t+1} \mid \bar{\stateSeq}_{\tau+1: t}, \bar{\inputSeq}_{\tau: t}, \bar{\modeSeq}_{\tau: t}, z_{\tau} \midsc \theta)\\
            = &\; p(\bar{u}_{t+1}, \xi_{t+1} \mid \bar{\stateSeq}_{\tau+1: t+1}, \bar{\inputSeq}_{\tau: t}, \bar{\modeSeq}_{\tau: t}, z_{\tau} \midsc \theta)\,
                  p(x_{t+1} \mid x_t, \bar{u}_t) \\
            = &\; p(\bar{u}_{t+1} \mid z_{t+1}, \xi_{t+1} \midsc \dBeta)\,
                  p(\xi_{t+1} \mid z_{t+1}, \xi_{t}, \midsc \dTheta)\,
                  p(x_{t+1} \mid x_t, \bar{u}_t),
        \end{aligned}
    \end{equation*}
}%
where the first equation follows the system dynamics \eqref{eq: sys_dyn} with $p(x_{t+1} \mid x_t, \bar{u}_t)$ independent of parameter $\theta$,
the second equation follows the definition of the policy model \eqref{eq: policy_estimation} and the definition of $z_t$ in \eqref{eq: input_state_history}.
Particularly,
$p(\xi_{t+1} \mid z_{t+1}, \xi_{t}, \midsc \dTheta)$ follows the switching mechanism \eqref{eq: switching_mechanism},
and $p(\bar{u}_{t+1} \mid z_{t+1}, \xi_{t+1} \midsc \dBeta)$ is specified by the subsystem dynamics \eqref{eq: subsystem_dyn}.
For this last term, the conditional distribution is Gaussian $\gauss(K_{\xi_t} z_t + b_{\xi_t}, \Sigma_{\xi_t})$.
To apply \EMpp, 
we parameterize this Gaussian distribution by defining $\dBeta \dfn \{\beta_1, \dots, \beta_i\}$ with $\beta_i = (C_i, \Lambda_i)$ where $C_i = \Sigma_i^{-1}\bmat{K_i & b_i}$ and $\Lambda_i = \Sigma^{-1}_i$.
With this parameterization, 
\EMpp applies the MM principle by iteratively constructing and minimizing a surrogate function of the regularized negative log-likelihood \eqref{eq: mle_problem}.
Following \cite[Lemma 3.1, Proposition 3.2, and Proposition 5.1]{wang2024em++},
the surrogate function is defined as 
\begin{subequations}\label{eq: surrogate-explicit}
    \begin{align}
        \Qi[1][k]{\dTheta}
        \dfn \!            &\!
        \tlsum_{t=\tau}^{T-2}\!\tlsum_{i,j\in \Xi}\!\frac{1}{T {-} \tau {-} 1}\!q_{t}^{i,j}\!(\theta^k)\!
        \Big(
            \!\lse\big(\Theta_{i}^\top \!z_t\big) {-} \Theta_{i, j}^\top z_t\!
        \Big), \label{eq: switch-surrogate-explicit}\\
        \Qi[2][k]{\dBeta}
        \dfn  
        & \tilde{c}_{ \theta^k} {+}
        \tlsum_{t=\tau}^{T-2}\!\tlsum_{i \in \Xi}\!\frac{1}{2(T {-} \tau {-} 1)}\! q_{t+1}^i\!(\theta^k)\!%
             \norm{\Lambda_i u_{t+1} {-} C_i \bsmat{z_t \\ 1}}^2_{\Lambda_i^{-1}} 
        \nonumber \\
        & \phantom{\tilde{c}_{ \theta^k}} {-}%
        \tlsum_{t=\tau}^{T-2}\!\tlsum_{i \in \Xi}\frac{1}{2(T {-} \tau {-} 1)} q_{t+1}^i( \theta^k)
            \ln\det(\Lambda_i)  
        \label{eq: subsys-surrogate-explicit}                  
    \end{align}
\end{subequations}
where 
$q_{t}^{i,j}( \theta^k) = p(\xi_{t} = i, \xi_{t+1} = j \mid \bar{\stateSeq}, \bar{\inputSeq}, z_\tau \midsc \theta^k)$, 
$q_{t+1}^i( \theta^k) = p(\xi_{t+1} = i \mid \bar{\stateSeq}, \bar{\inputSeq}, z_\tau \midsc \theta^k )$ are posterior distribution derived from the majorization step using \cite[Proposition 5.2]{wang2024em++},
and $\tilde{c}_{\theta^k}$ is a constant.
By choosing a convex regularization term $\reg(\theta) = \reg_1(\dTheta) + \reg_2(\dBeta)$, where
\begin{subequations}\label{eq: reg_exp}
\begin{align}
    \reg_1(\dTheta) = &\; \tlsum_{i = 1}^d \frac{\gamma_1}{2} \norm{\Theta_i}^2_{\mathrm{F}}, \\
    \reg_2(\dBeta) = &\; \frac{1}{2}\tlsum_{i = 1}^d \gamma_2 \left[
            \tr(\Lambda_i) {-} \ln\det(\Lambda_i) 
        \right] {+} \gamma_3 \norm{C_i}^2_{\Lambda_i^{-1}}
\end{align}
\end{subequations}
and $\gamma_1, \gamma_2, \gamma_3 > 0$,
the surrogate problem 
\begin{equation}\label{eq: surrogate_problem} 
    \minimize_{\theta} \Qi[1][k]{\dTheta} + \Qi[2][k]{\dBeta} + \reg(\theta)
\end{equation}
is a convex optimization problem.

\section{Stable system identification}\label{sec: stability}
The primary goal of identifying a parameterized model \eqref{eq: policy_estimation} is for multi-step ahead prediction.
Due to its recurrent structure on the control input $u_t$, 
as shown in \eqref{eq: policy_estimation} and \eqref{eq: input_state_history},
a stable system \eqref{eq: policy_estimation} is desired to prevent unbounded growth in the multi-step ahead prediction.

We assume that $z_t$ is linear w.r.t. $u_t$.
For notational simplicity, we set $t_u = 1$ and $z_t = \bmat{u_{t-1}& x_t}$.
Under this parameterization, 
the model \eqref{eq: policy_estimation} is in the form:
\begin{subequations}\label{eq: policy_esti_stable}
    \begin{align}
        \xi_t \sim &\; \sigma_{\xi_t}(\Theta_{\xi_{t-1}}^\top z_t)\\
        u_t =&\; A_{\xi_t} u_{t-1} + B_{\xi_t} x_t + \tilde{w}_t,
    \end{align}
\end{subequations}
where $K_{\xi_t} = \bmat{A_{\xi_t} & B_{\xi_t}}$ and $\tilde{w}_t \sim \gauss(b_{\xi_t}, \Sigma_{\xi_t})$.
Initialized with $u_0$ and $x_1$,
the $N$-step prediction of the control input is computed via the forward recursion:
\begin{equation*}
    \begin{aligned}
        \hat{u}_1 = &\; A_{\xi_1} u_{0} + B_{\xi_1} x_1 + \tilde{w}_0, \\
        \hat{u}_2 = &\; A_{\xi_2} \hat{u}_{1} + B_{\xi_2} \hat{x}_2 + \tilde{w}_2, \\
        \vdots &\; \\
        \hat{u}_N = &\; A_{\xi_N} \hat{u}_{N-1} + B_{\xi_N} \hat{x}_N + \tilde{w}_N,
    \end{aligned}
\end{equation*}
where $\hat{x}_{t}$ is the state prediction by employing the dynamic model $\hat{x}_{t+1} = f(\hat{x}_t, \hat{u}_t)$ for $t = 1, \dots, N-1$.
To prevent unbounded growth of control inputs in the multi-step ahead prediction,
it is necessary to identify a system \eqref{eq: policy_esti_stable} that exhibits stability w.r.t. $u_{t-1}$.
The stability condition is enforcing a Lyapunov inequality.
Namely, there exists a matrix $P\in \mathbb{S}_{++}^{n_u}$, such that 
for all $i = 1, \dots, d$,
\begin{equation}\label{eq: lyapunov_cons}
    A_i^\top P A_i - P \prec 0.
\end{equation}

Recall from \cref{sec: policy_learning},
the variables in the surrogate problem \eqref{eq: surrogate_problem} are obtained by performing a change of variables.
To incorporate the constraint \eqref{eq: lyapunov_cons} into the problem \eqref{eq: mle_problem} and the surrogate problem \eqref{eq: surrogate_problem},
it is necessary to perform the same change of variables for the Lyapunov inequality \eqref{eq: lyapunov_cons}.
The following lemma provides a sufficient condition on the new variables to satisfy inequality \eqref{eq: lyapunov_cons}.

\begin{lemma}\label{lem: lmi}
    Consider system \eqref{eq: policy_esti_stable}.
    Let $C_i = \Sigma_i^{-1} \bmat{K_i & b_i} = \Sigma_i^{-1} \bmat{A_i & B_i & b_i}$, 
    $\Lambda_i = \Sigma_i^{-1}$.
    Define a linear selection operator $S(C_i) = \Sigma_i^{-1}A_i$ that selects the first $n_u$ columns of matrix $C_i$.
    The inequality 
    \begin{equation}\label{eq: lmi_stable}
        \bmat{
            P & S(C_i)^\top \\
            S(C_i) & 2\Lambda_i - P
        }
        \succ 0, \quad \forall i = 1, \dots, d,
    \end{equation}
    implies \eqref{eq: lyapunov_cons}.
\end{lemma}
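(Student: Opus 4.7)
The plan is to show that the block LMI \eqref{eq: lmi_stable} implies the Lyapunov inequality \eqref{eq: lyapunov_cons} through two successive reductions: a Schur complement step that converts the LMI into an inequality involving $\Lambda_i A_i$, followed by a matrix algebraic identity that replaces the intermediate expression by $A_i^\top P A_i$.

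First I would apply the Schur complement to \eqref{eq: lmi_stable}. Since the LMI is strictly positive definite, both diagonal blocks $P$ and $2\Lambda_i - P$ are strictly positive definite. Taking the Schur complement with respect to the $(2,2)$ block yields
\begin{equation*}
P - S(C_i)^\top (2\Lambda_i - P)^{-1} S(C_i) \succ 0.
\end{equation*}
Substituting the definition $S(C_i) = \Sigma_i^{-1} A_i = \Lambda_i A_i$, this becomes
\begin{equation*}
P \succ A_i^\top \Lambda_i (2\Lambda_i - P)^{-1} \Lambda_i A_i.
\end{equation*}

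The second step is to show that the middle factor dominates $P$, i.e.\ $\Lambda_i (2\Lambda_i - P)^{-1} \Lambda_i \succeq P$. For this I would use the elementary identity that for any symmetric matrices $Q$ and $R$ with $R \succ 0$,
\begin{equation*}
(Q - R)\, R^{-1}\, (Q - R) = Q R^{-1} Q - 2Q + R \succeq 0,
\end{equation*}
so that $Q R^{-1} Q \succeq 2Q - R$. Setting $Q = \Lambda_i$ and $R = 2\Lambda_i - P$ (which is PD by the LMI), this gives exactly $\Lambda_i (2\Lambda_i - P)^{-1} \Lambda_i \succeq 2\Lambda_i - (2\Lambda_i - P) = P$. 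Conjugating by $A_i$ preserves the inequality, yielding $A_i^\top \Lambda_i (2\Lambda_i - P)^{-1} \Lambda_i A_i \succeq A_i^\top P A_i$.

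Combining the two inequalities chains to $P \succ A_i^\top P A_i$, which is precisely \eqref{eq: lyapunov_cons}. The main obstacle in this proof is finding the correct reformulation to bridge the ``$\Lambda_i$-weighted'' Schur complement expression with the target inequality; once the perfect-square identity $(Q-R)R^{-1}(Q-R) \succeq 0$ is recognized with the right choice of $Q$ and $R$, the rest of the argument is a direct consequence of the Schur complement and monotonicity of PSD conjugation.
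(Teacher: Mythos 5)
Your proof is correct, and it reaches the conclusion by a genuinely different (though closely related) route from the paper. The paper first rewrites the target \eqref{eq: lyapunov_cons} as $S(C_i)^\top \Lambda_i^{-1} P \Lambda_i^{-1} S(C_i) \prec P$ by inserting $\Sigma_i^{-1}\Sigma_i$ around $P$, and then invokes a textbook LMI result \cite[Sec.~2.4.3.6]{caverly2019lmi}; that cited result internally replaces the $(2,2)$ block $2\Lambda_i - P$ of \eqref{eq: lmi_stable} by the larger matrix $\Lambda_i P^{-1}\Lambda_i$ (via $\Lambda_i P^{-1}\Lambda_i \succeq 2\Lambda_i - P$) and only then takes a Schur complement. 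You instead take the Schur complement of \eqref{eq: lmi_stable} directly, obtaining $P \succ A_i^\top \Lambda_i (2\Lambda_i - P)^{-1}\Lambda_i A_i$, and afterwards bound the middle factor from below by $P$ using the perfect-square inequality $Q R^{-1} Q \succeq 2Q - R$ with $Q = \Lambda_i$ and $R = 2\Lambda_i - P$. Both arguments ultimately rest on the same elementary bound, but applied in the opposite order and with a different choice of $Q$ and $R$; yours has the advantage of being fully self-contained rather than deferring the key step to an external reference, and you correctly supply the point the citation hides, namely that $2\Lambda_i - P \succ 0$ follows from \eqref{eq: lmi_stable} so that the Schur complement and the inverse $(2\Lambda_i - P)^{-1}$ are legitimate. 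All the individual steps (positive definiteness of the diagonal blocks, the expansion $(Q-R)R^{-1}(Q-R) = QR^{-1}Q - 2Q + R$, monotonicity of congruence, and chaining $\succ$ with $\succeq$) check out.
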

\begin{proof}
    The first term in \eqref{eq: lyapunov_cons} can be expressed as 
    \[
        A_i^\top P A_i 
        = A_i^\top\big(\Sigma_i^{-1}\Sigma_i\big) P \big(\Sigma_i \Sigma_i^{-1}\big)A_i 
        = S(C_i)^\top \Sigma_i P \Sigma_i S(C_i).
    \]
    Hence, the inequality \eqref{eq: lyapunov_cons} is equivalent to
    \begin{equation}\label{eq: schur_compliment_based_inequality}
        S(C_i)^\top \Lambda^{-1}_i P \Lambda^{-1}_i S(C_i) - P \prec 0.
    \end{equation}
    Applying \cite[Sec. 2.4.3.6]{caverly2019lmi},
    \eqref{eq: lmi_stable} implies \eqref{eq: schur_compliment_based_inequality},
    thus implies \eqref{eq: lyapunov_cons}.
\end{proof}

Having established the stability conditions through the LMI formulation \eqref{eq: lmi_stable},
we can now incorporate these constraints into the minimization step of \EMpp.
Specifically, while the switching surrogate function \eqref{eq: switch-surrogate-explicit} remains unchanged,
the subsystem surrogate problem \eqref{eq: subsys-surrogate-explicit} is modified to include the Lyapunov stability constraints \eqref{eq: lmi_stable}.
Thus, to summarize, in the minimization step, 
we solve problem
\begin{equation}
    \begin{aligned}
        \minimize_{\theta, P}  \Qi[1][k]{\dTheta} \;+&\; \Qi[2][k]{\dBeta} + \reg(\theta) \\
        \stt. \quad\quad\quad\quad 
        P \succ &\; 0, \\
         \bmat{
            P & S(C_i)^\top \\
            S(C_i) & 2\Lambda_i - P
        }
        \succ &\; 0, \quad \forall i = 1, \dots, d
    \end{aligned}
\end{equation}
where $\Qi[1][k]{}, \Qi[2][k]{}, \reg_2$ are defined in \eqref{eq: switch-surrogate-explicit}, \eqref{eq: subsys-surrogate-explicit}, and \eqref{eq: reg_exp}, respectively.

\section{Application: policy learning using driving data from human demonstration}\label{sec: application}
In this section,
we apply \cref{alg: two_stage} to two autonomous driving datasets. 
The data was collected from human drivers on a Stewart-platform driving simulator in previous work \cite{acerbo2024drivingvisiondifferentiableoptimal}. The ego vehicle, controlled by the human driver, was modelled via Simcenter Amesim as a 15 degree-of-freedom high-fidelity model, comprising models of the chassis, tires, suspensions, powertrain, aerodynamics and steering torque feedback. 
To incorporate the proposed model into the MPC framework,
a dynamical bicycle model is employed.
We first present the details about the control input estimation by solving \eqref{eq: nonlinear_ls},
followed by numerical experiment results on the two datasets.

\subsection{Control input estimation}\label{sec: input_estimation_dynamic_bicycle}

We consider a dynamic bicycle model \cite{allamaa2022real, kong2015kinematic}:
\begin{equation}\label{eq: dynamic_bicycle_model}
    \dot{x} = f(x,u) = 
    \begin{bmatrix}
    v_x \cos \psi - v_y \sin \psi \\
    v_y \cos \psi + v_x \sin \psi \\
    \omega \\
    a + v_y\omega - \frac{1}{M} F_{y,f} \sin \delta \\
    -v_x\omega + \frac{1}{M} (F_{y,f} \cos \delta + F_{y,r}) \\
    \frac{1}{I_Z} (l_f F_{y,f} \cos \delta - l_r F_{y,r})
    \end{bmatrix},
\end{equation}
where the system state 
$x = \bmat{p_X & p_Y & \psi & v_x & v_y & \omega}^\top$ 
denotes the vehicle's $X, Y$ coordinates, the yaw angle in the world frame,
the longitudinal velocity and the lateral velocity in the vehicle body frame, and the yaw rate.
The control input is given by $u = \bmat{a & \delta}$,
where $a$ is the longitudinal acceleration and $\delta$ is the steering angle.
In particular, we consider that $a = T_w / (M R_w)$ where $T_w$ is the longitudinal drive torque from the ground, 
$M$ denotes the vehicle mass,
$R_w$ is the rolling radius of the drive wheel.
In addition, $I_z$ is the inertial about the $Z$ axis, 
$l_f$, $l_r$ are the distances from the center of gravity to the front and rare axle, respectively.
We employ a linear tire model to approximate the lateral forces, utilizing cornering stiffness values $C_f$ and $C_r$ for the front and rear tires:
\begin{equation*}
    \begin{aligned}
        F_{y, i} = &\; 2 C_{i} \alpha_i, \quad i \in \{f, r\}, \\
        \alpha_f = &\; \delta - \arctan(\frac{\omega l_f + v_y}{v_x}),\\
        \alpha_r = &\; \arctan(\frac{\omega l_r - v_y}{v_x}).
    \end{aligned}
\end{equation*}

To estimate the steering angle, 
we employ the dynamics of $\omega$ and solve the equation 
\[
    \dot{\omega} = \frac{1}{I_Z} (l_f F_{y, f} \cos \delta - l_r F_{y,r}),
\]
where we use Euler forward method to estimate the derivative $\dot{\omega}$.
Then, employing the dynamics of $v_x$ defined in \eqref{eq: dynamic_bicycle_model},
we obtain the estimation of acceleration by solving the equation 
\[
    \dot{v}_x = a + v_y\omega - \frac{1}{M} F_{y,f}(\bar{\delta}) \sin \bar{\delta}.
\]

\subsection{Numerical experiment result}
As the primary goal of learning the policy is for motion prediction,
we validate the performance of the estimated control policy through state trajectory prediction accuracy.
To evaluate the effectiveness of the stability constraint,
we employ the \emph{recursive one-step-ahead prediction}:
the control input $\hat{u}_t$ is estimated using \eqref{eq: policy_estimation} with the previously predicted control inputs and the ground truth state history:
\[z_t = (\hat{u}_{t-1}\dots, \hat{u}_{t-t_u}, x_t^{\mathrm{true}}, \dots, x_{t-t_x}^{\mathrm{true}}),\]
followed by one-step ahead prediction $\hat{x}_{t+1} = f(x_t^{\mathrm{true}}, \hat{u}_t)$.
To integrate the model into the MPC framework and for comparison against other methods,
we employ the \emph{joint input-state prediction}:
the control input $\hat{u}_t$ is estimated using \eqref{eq: policy_estimation} with previously predicted inputs and states:
\[z_t = ( \hat{u}_{t-1}\dots, \hat{u}_{t-t_u}, \hat{x}_t, \dots, \hat{x}_{t-t_x}),\]
followed by the one-step ahead prediction $\hat{x}_{t+1} = f(\hat{x}_t, \hat{u}_t)$.

We compare our method against several approaches.
As baseline comparison,
we consider two simplified variants of the switching mechanism:
\begin{inlinelist}
    \item Mode-dependent switching \eqref{eq: only_mode} (\texttt{only-mode}), and 
    \item state-dpendent switching \eqref{eq: only_state} (\texttt{only-state}).
\end{inlinelist}
The behavioral cloning from observation (BCO(0)) framework \cite{torabi2018behavioral} serves as another comparative baseline, 
along with a constant control input (CC) over the prediction horizon---a conventional approach in short-term MPC prediction applications.
Furthermore, 
since the proposed method explores the hierarchical structure with a controller and a system, 
we evaluate it against an end-to-end approach, 
where the states are directly predicted by the previous state history via an LSTM network \cite{hochreiter1997long}.
The performance of each method is evaluated
by the mean absolute error (MAE):
\[ 
    \mathrm{MAE}_i = \tfrac{1}{N} \tlsum_{t = 0}^{N-1}\abs{\hat{x}_{i, t+1} - x^{\mathrm{true}}_{i, t+1}}, \quad\forall i = 1, \dots, n_x,
\]
where $N$ represents the trajectory length.

\subsubsection{Lane-keeping scenario}\label{sec: lane_keeping}
We firstly evaluate the proposed method on a lane-keeping scenario. 
The human driver is driving on a $\SI{4.5}{\meter}$ wide track with 8 clothoidal curves, and controls throttle, brake and steering wheel angle.
In this scenario, five trajectories are collected with a sampling rate of $\SI{1}{\kilo\hertz}$.
Of these trajectories,
three trajectories are used for training, one trajectory for validation and one for testing.
We down-sample all trajectories and the estimated inputs $\{\bar{u}_t\}_{t=0}^{T-1}$ presented in \cref{sec: input_estimation_dynamic_bicycle} with $\SI{20}{\hertz}$,
resulting in 2200 data points for each trajectory.

To construct the input-state history \eqref{eq: input_state_history},
we choose $z_t = (\bar{u}_{t-1}, x_{t-1}, x_t, m_{t-1}, m_t)$,
where the map information
$m_t = (
    \omega^{\mathrm{c}}_{0, t}, \abs{\omega^{\mathrm{c}}_{0, t}}, \dots, \omega^{\mathrm{c}}_{3, t}, \abs{\omega^{\mathrm{c}}_{3, t}}
).$
Here, $\omega^{\mathrm{c}}_{i, t}$ represents the centerline yaw rate at time $t$, measured at a point $i$ meters ahead of the current vehicle position, where $i = 0, 1, 2, 3$.
In addition,
we transform the vehicle state into the Frenet coordinates \cite{qian2016hierarchical}.
Frenet coordinates describe the vehicle pose with respect to the centerline of the road.
As illustrated in \cref{fig: frenet}, the Frenet coordinate system consists of the arc length $\sigma$, which represents the travel distance along the road;
the lateral offset $d$,
and the heading error $\varphi$ between the vehicle yaw angle and the heading of the road.
We use $x_t = \bmat{v_{x, t}, v_{y,t}, \psi_t, \frac{\sigma_t}{\sigma_{\max}}, d_t, \varphi_t}$ to construct $z_t$,
where $\sigma_{\max}$ is the maximum value of the arc length $\sigma$.

\begin{figure}[!htpb]
    \centering
    \includegraphics[width=0.3\textwidth]{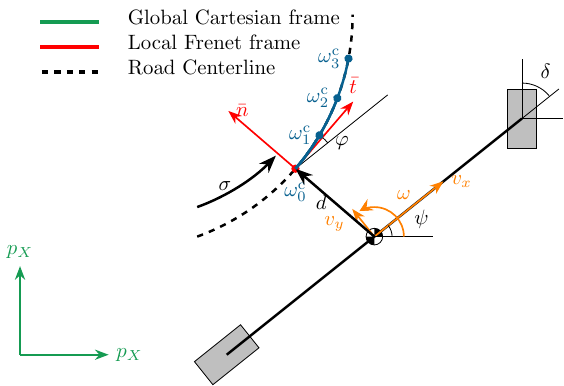}
    \caption{Illustration of Frenet coordinates}\label{fig: frenet}
\end{figure}

We set the regularization parameters $\gamma_1 = \gamma_2 = \gamma_3 = \SI{5e-6}{}$ in \eqref{eq: reg_exp}
and choose the number of modes equal to $3$.
Since the problem \eqref{eq: mle_problem} is nonconvex, 
we solve the problem 10 times with random initializations.
The parameter that performs best on the validation set is selected for the final model.
As the proposed method is a stochastic policy that outputs a distribution,
we sample 100 trajectories from the identified distribution.
To reduce the impact of outliers,
we compute the $1\%$ trimmed mean of these sampled trajectories for the final evaluation metric. 

To evaluate the performance of the imposed stability constraint,
we employ two initialization schemes:
\begin{inlinelist}
    \item We initialize the model \eqref{eq: policy_estimation} following \cite[Proposition 5.2]{wang2024em++}, as suggested in \cite{wang2024em++}; and
    \item we initialize the model \eqref{eq: policy_estimation} with a uniform distribution for the discrete variable $\xi_0$, and with the initial control input $\hat{u}_{0} = \bmat{-1.5, -0.04}$.
\end{inlinelist}
The recursive one-step ahead prediction result is illustrated in \cref{fig: LK_one_step}.
\begin{figure}[!htpb]
    \centering
    \includegraphics[width=0.4\textwidth]{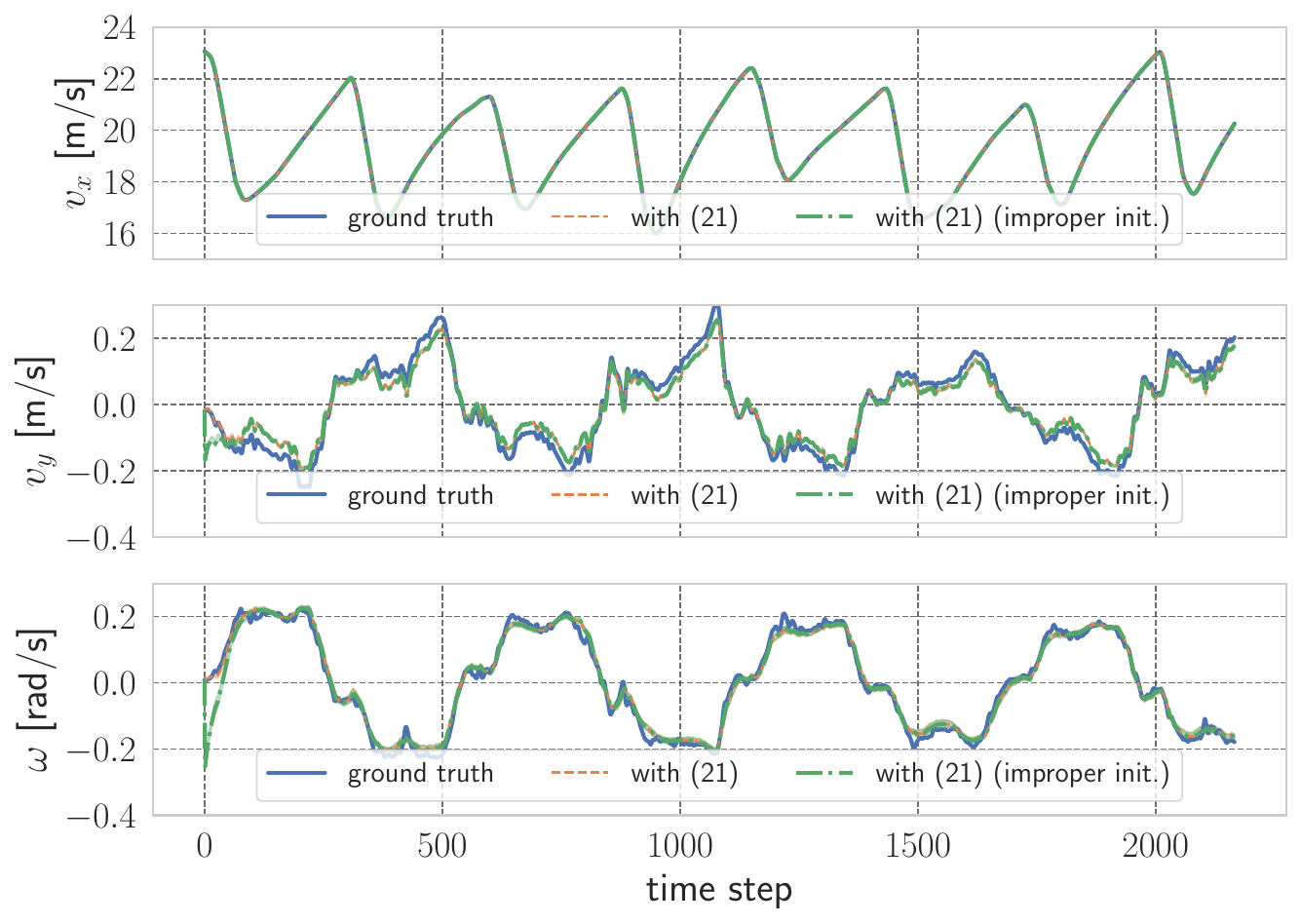}
    \caption{Recursive one-step-ahead prediction for lane-keeping scenario. 
        Shaded regions represent quantiles between $0.25$ and $0.75$. 
    \texttt{Improper init} denotes the case where $p(\xi_0)$ follows a uniform distribution and $\hat{u}_{0} = \bmat{-1.5, -0.04}$.}
    \label{fig: LK_one_step}
\end{figure}

As shown in \cref{fig: LK_one_step},
the properly initialized policy consistently achieves high prediction accuracy.
In contrast,
the improperly initialized policy exhibits large prediction errors in both the lateral velocity $v_y$ and yaw rate $\omega$ initially.
These errors diminish over time.
The prediction error for longitudinal velocity remains consistently low throughout the simulation, 
which can be attributed to the low weights assigned to previous control inputs in the model. 
The convergent behavior indicates that the method effectively mitigates error accumulation, 
validating the effectiveness of the imposed stability constraint.
Note that while training without the stability constraint \eqref{eq: lmi_stable} occasionally yields stable behavior for this particular dataset due to its inherent dynamics,
this stability is not guaranteed.
This will be demonstrated in the subsequent double-lane-change scenario, 
where the absence of the stability constraint consistently leads to unstable predictions.

Next, we evaluate the performance of joint input-state prediction.
For the BCO implementation,
we employ the method presented \cref{sec: input_estimation_dynamic_bicycle} to estimate the control input,
then utilizing a multilayer perceptron (MLP) with 2 hidden layers of 128 neurons for policy learning. 
The network uses the same $\{z_t\}_{t=0}^{T-1}$ with $\texttt{silu}$ activation function.
For the end-to-end approach, we use one LSTM cell with 64 hidden states connected to a linear layer.
The LSTM processes $(x_{t-1}, x_t, m_{t-1}, m_{t-1})$ as input to predict the longitudinal velocity $v_{x, t+1}$,
lateral velocity $v_{y, t+1}$, and the yaw rate $\omega_{t+1}$.
The global coordinates $p_X, p_Y$, and the yaw angle $\psi$ are then integrated using the predicted first order information via Euler-forward method.
The testing trajectory is divided into small segments, each consisting of 100 time steps.
Since both the proposed method and LSTM incorporate latent variables,
we allocate the first 30 time steps for method initialization, 
followed by 60 steps (corresponds to $\SI{3}{\second}$) for prediction.
The mean and standard deviation over all segments are summarized in \cref{tab: MAE_lane_keeping}.
\begin{table*}[!htbp]
    \centering
    \caption{MAE of joint prediction for lane-keeping scenario ($\downarrow$). The best result is marked in bold}
    \label{tab: MAE_lane_keeping}
    \scalebox{0.9}{
    \begin{tabular}{l llllll}
        \toprule \\[-2pt]
                                                &    $p_X [m]$           & $p_Y [m]$          &  $\psi [\mathrm{rad}]$ & $v_x [m / s]$      & $v_y [m/s]$         & $\omega [\mathrm{rad} / s]$ \\
        \midrule                                                              \\
        \EMpp \cite{wang2024em++}               &    $ \boldsymbol{0.273} (\pm 0.228)$  & $ \boldsymbol{0.307} (\pm 0.364)$ & $ 0.010 (\pm 0.005)$       & $ \boldsymbol{0.484} (\pm 0.494)$   & $ 0.048 (\pm 0.017)$   & $ \boldsymbol{0.014} (\pm 0.007)$         \\
        \EMpp (\texttt{only-state})             &    $ 0.286 (\pm 0.355)$  & $ 0.309 (\pm 0.379)$ & $ \boldsymbol{0.009} (\pm 0.005)$       & $ 0.523 (\pm 0.587)$   & $ 0.049 (\pm 0.017)$   & $ \boldsymbol{0.014} (\pm 0.009)$         \\
        \EMpp (\texttt{only-mode})              &    $ 0.413 (\pm 0.269)$  & $ 0.433 (\pm 0.278)$ & $ \boldsymbol{0.009} (\pm 0.006)$       & $ 0.749 (\pm 0.393)$   & $ \boldsymbol{0.040} (\pm 0.016)$   & $ 0.015 (\pm 0.006)$         \\
            
        BCO(0) \cite{torabi2018behavioral}      &    $0.359 (\pm 0.289)$  & $0.385 (\pm 0.337)$ & $0.014 (\pm 0.008)$       & $0.562 (\pm 0.482)$   & $0.050 (\pm 0.020)$   & $0.018( \pm 0.018)$          \\
        CC                                      &    $0.577 (\pm 0.502)$  & $0.615 (\pm 0.665)$ & $0.046 (\pm 0.041)$       & $0.653 (\pm0.780)$   & $0.069 (\pm 0.026)$   & $0.048( \pm 0.040)$          \\
        \midrule\\[-2pt]
        End-to-end (LSTM \cite{hochreiter1997long})                       &    $0.806 (\pm 0.800)$  & $0.675 (\pm 0.518)$ & $0.013 (\pm 0.008)$       & $0.980 (\pm 0.676)$   & $0.043 (\pm 0.021)$   & $0.019( \pm 0.009)$          \\                
        \bottomrule
    \end{tabular}
    }
\end{table*}

From \cref{tab: MAE_lane_keeping}, we observe that the model \eqref{eq: policy_estimation}, trained by \EMpp, outperforms the baselines.
This enhanced performance demonstrates the model's capacity to effectively capture the nonlinear human behavior.
While the simplified switching mechanisms showed advantages in specific state predictions,
overall, the comprehensive switching mechanism \eqref{eq: switching_mechanism} achieves higher prediction accuracy.
Although the LSTM demonstrates comparable accuracy in predicting the lateral velocity $v_y$ and yaw rate $\omega$,
the longitudinal velocity prediction error is significantly higher,
resulting in large deviation in the global coordinates.
These results validate the superior performance of the hierarchical approach.

\subsubsection{Double-lane-change scenario}
We secondly evaluate the method on a double-lane-change scenario.
The human driver is driving behind a vehicle on a straight road with a randomly initialized distance.
The front vehicle is driving along the lane center.
At the beginning of the simulation, 
the front vehicle starts decelerating at a randomly selected constant rate of either $a = \SI{-1}{\meter / \second^2}$ or $a = \SI{-0.5}{\meter / \second^2}$.
This deceleration continues until the front vehicle comes to a complete stop.
In this scenario, $21$ trajectories are collected with a sampling rate of $\SI{1}{\kilo\hertz}$. 
Of these trajectories, 14 trajectories are used for training, 3 trajectories are used for validation and 4 trajectories are for testing.
We down-sample the trajectories and the estimated inputs $\{\bar{u}_t\}_{t=0}^{T-1}$ presented in \cref{sec: input_estimation_dynamic_bicycle} with $\SI{20}{\hertz}$, 
resulting in 400 data points for each trajectory.

To construct the input-state history \eqref{eq: input_state_history},
we choose $z_t = (\bar{u}_{t-1}, x_{t-1}, x_t, m_{t-1}, m_{t}, x^{\mathrm{rel}}_{t-1}, x^{\mathrm{rel}}_t)$,
where we use $x_{t} = \bmat{v_{x,t}, v_{y,t}, \psi_t}$ for the ego vehicle.
The map information $m_t = \bmat{d_t^{1}, d_t^{2}, w_{\mathrm{lane}}}$ is a stack of distance to the lane center $d_t^1, d_t^2$, 
and the lane width $w_{\mathrm{lane}} = \SI{4.5}{\meter}$.
The relative states $x_t^{\mathrm{rel}} = \bmat{p_{X, t}^{\mathrm{rel}}, p_{Y, t}^{\mathrm{rel}}, \psi_t^{\mathrm{rel}}, v_{x, t}^{\mathrm{rel}}, v_{y, t}^{\mathrm{rel}}}$ 
denotes the relative distance, yaw angle, the longitudinal and lateral velocity at the front vehicle's body frame.

We set the regularization parameters $\gamma_1 = \gamma_2 = \gamma_3 = \SI{5e-6}{}$ in \eqref{eq: reg_exp}
and choose the number of modes equal to $4$.
Since the problem \eqref{eq: mle_problem} is nonconvex, 
we solve the problem 10 times with random initializations.
The parameter that performs best on the validation set is selected for the final model.
We again use the $1\%$ trimmed mean over 100 sampled trajectories for evaluation. 

We evaluate the performance of the imposed stability constraint using the previously described initialization schemes:
\begin{inlinelist}
    \item The model \eqref{eq: policy_estimation} is initialized following \cite[Proposition 5.2]{wang2024em++}; and
    \item the model \eqref{eq: policy_estimation} is initialized with a uniform distribution for the discrete variable $\xi_0$ and $\hat{u}_{0} = \bmat{-0.05, -0.01}$.
\end{inlinelist}
For comparative analysis, we also evaluate the policy trained without the stability constraint \eqref{eq: lmi_stable}.
The recursive one-step ahead predictions are shown in \cref{fig: DLC_one_step}.
\begin{figure}[!htpb]
    \centering
    \includegraphics[width=0.4\textwidth]{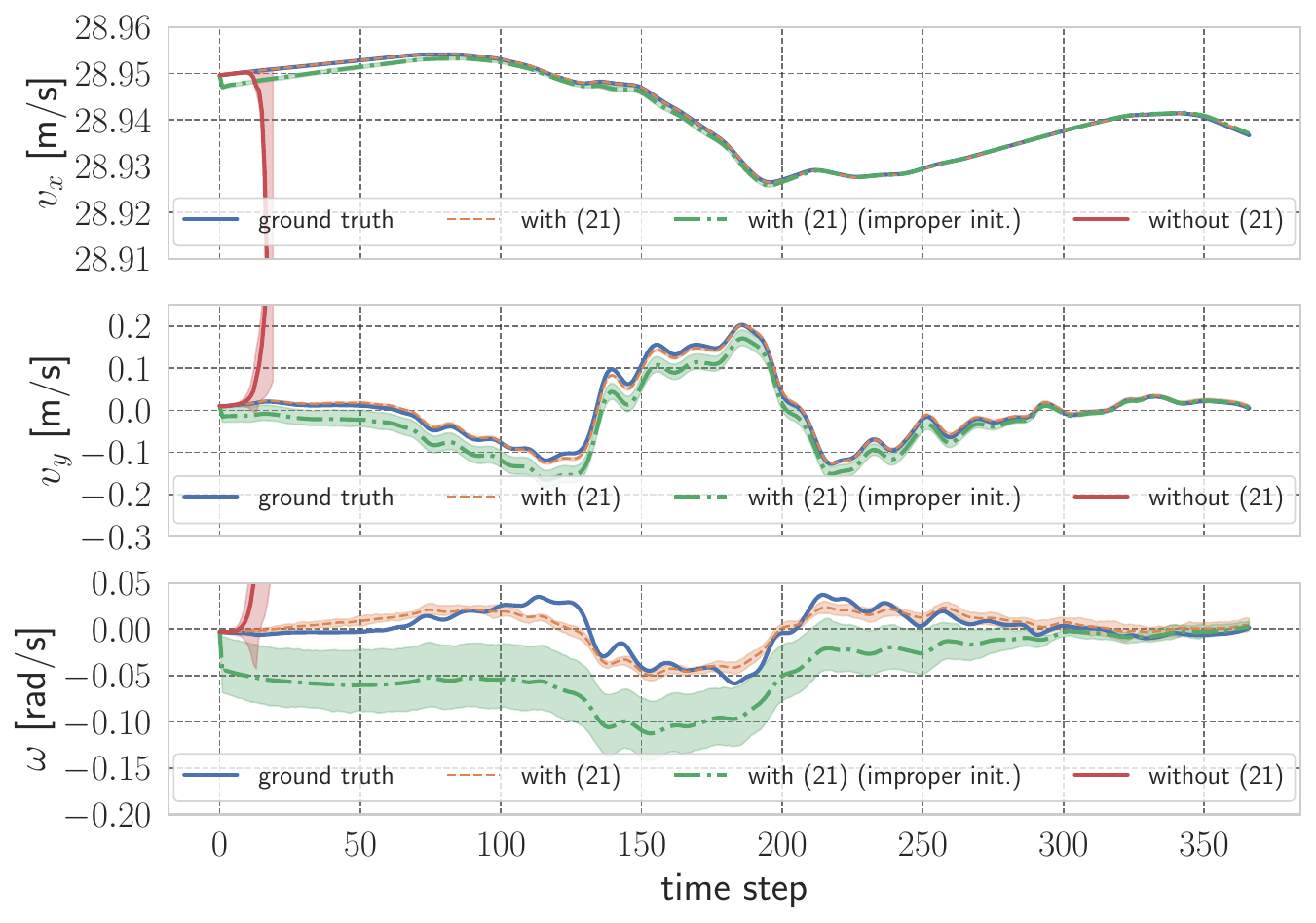}
    \caption{Recursive one-step-ahead prediction for double-lane-change scenario. 
    Shaded regions represent quantiles between $0.25$ and $0.75$.
    \texttt{Improper init} denotes that $p(\xi_0)$ follows a uniform distribution and $\hat{u}_{0} = \bmat{-0.05, -0.01}$.
    For visual clarity, only the first 20 steps of the case \texttt{without \eqref{eq: lmi_stable}} are shown due to significant trajectory oscillations.}
    \label{fig: DLC_one_step}
\end{figure}

As illustrated in \cref{fig: DLC_one_step},
the predictions from the policy trained without constraint \eqref{eq: lmi_stable} rapidly diverge,
demonstrating the necessity of incorporating the stability constraint in the training process. 
In contrast,
the properly initialized policy maintains consistently accurate prediction throughout the simulation.
While the improperly initialized policy exhibits large deviations in all three states initially,
these deviations steadily decrease as the simulation progresses.
This convergent behavior further validates the effectiveness of the imposed stability constraint.

For the joint input-state prediction evaluation,
both BCO and LSTM use the same structure presented in \cref{sec: lane_keeping},
with their networks taking $(x_{t-1}, x_t, m_{t-1}, m_{t}, x^{\mathrm{rel}}_{t-1}, x^{\mathrm{rel}}_t)$ as input.
Same as \cref{sec: lane_keeping},
the testing trajectory is divided into 100-step segments, 
using the first 30 steps for initialization and the subsequent 60 steps (corresponds to $\SI{3}{\second}$) for prediction.
The mean and standard deviation over all segments are summarized in \cref{tab: MAE_double_lane_change}.
    \begin{table*}[!htpb]
        \centering
        \caption{MAE of joint prediction for double-lane-change scenario ($\downarrow$). The best result is marked in bold}
        \label{tab: MAE_double_lane_change}
        \scalebox{0.9}{
        \begin{tabular}{l llllll}
            \toprule \\[-2pt]
                                                    &    $p_X [m]$           & $p_Y [m]$          &  $\psi [\mathrm{rad}]$ & $v_x [m / s]$      & $v_y [m/s]$         & $\omega [\mathrm{rad} / s]$ \\
            \midrule                                                              \\
            \EMpp \cite{wang2024em++}               &    $ \boldsymbol{0.004} (\pm 0.006)$  & $ \boldsymbol{0.180} (\pm 0.125)$ & $ \boldsymbol{0.006} (\pm 0.004)$       & $ \boldsymbol{0.001} (\pm 0.001)$   & $ \boldsymbol{0.026} (\pm 0.015)$   & $ \boldsymbol{0.008} (\pm 0.004)$         \\
            \EMpp (\texttt{only-state})             &    $ 0.009 (\pm 0.008)$  & $ 0.279 (\pm 0.119)$ & $ 0.014 (\pm 0.005)$       & $ 0.003 (\pm 0.003)$   & $ 0.048 (\pm 0.027)$   & $ 0.017 (\pm 0.010)$         \\
            \EMpp (\texttt{only-mode})              &    $ 0.009 (\pm 0.007)$  & $ 0.284 (\pm 0.135)$ & $ 0.014 (\pm 0.005)$       & $ 0.003 (\pm 0.003)$   & $ 0.049 (\pm 0.029)$   & $ 0.017 (\pm 0.011)$         \\
            
            BCO(0): \cite{torabi2018behavioral}     &    $ 0.007 (\pm 0.007)$  & $ 0.213 (\pm 0.163)$ & $ 0.007 (\pm 0.005)$       & $ 0.004 (\pm 0.002)$   & $ 0.029 (\pm 0.020)$   & $ \boldsymbol{0.008} (\pm 0.005)$         \\
            CC                                      &    $ 0.017 (\pm 0.026)$  & $ 0.488 (\pm 0.498)$ & $0.026 (\pm 0.020)$       & $0.003 (\pm0.004)$   & $0.058 (\pm 0.045)$   & $0.018 (\pm 0.014)$         \\
            \midrule\\[-2pt]
            End-to-end (LSTM \cite{hochreiter1997long})                       &    $0.065 (\pm 0.034)$  & $0.206 (\pm 0.117)$ & $0.007 (\pm 0.005)$       & $0.045 (\pm 0.025)$   & $0.028 (\pm 0.016)$   & $\boldsymbol{0.008}( \pm 0.005)$          \\                    
            \bottomrule
        \end{tabular}
        }
    \end{table*}

The performance of \eqref{eq: policy_estimation}, trained by \EMpp,
remains consistent with the observations in \cref{sec: lane_keeping}:
the model \eqref{eq: policy_estimation} achieves superior performance across all variables.
Notably, the complete switching mechanism again demonstrated enhanced overall performance compared to its simplified variants,
validating the necessity of incorporating both state and mode dependence in switching mechanism design.
Consistently to the results in \cref{sec: lane_keeping},
LSTM achieves comparable performance in predicting the lateral velocity $v_y$ and $\omega$ while exhibiting large prediction error in longitudinal velocity $v_x$, 
resulting in overall inferior performance.
The results further substantiate the advantages of the hierarchical approach over the end-to-end method.

\section{Conclusion}
In this paper, 
we have presented a stochastic switching system for policy learning from state-only trajectories without requiring action demonstrations. 
A sufficient condition ensuring model stability is derived.
This stability condition can be directly incorporated in the training process.
The proposed approach has been validated on two autonomous driving datasets,
where both the stability and prediction accuracy of the model demonstrate effective results.
Future work includes integrating the proposed model into an MPC framework,
and developing a stochastic version of \EMpp to efficiently handle large-scale datasets while maintaining the established theoretical guarantees.

\addcontentsline{toc}{section}{References}
\bibliographystyle{IEEEtran}
\bibliography{IEEEabrv,reference}

\end{document}